\newif\ifcomment\commenttrue
\newcommand{\gem}[1]{\mbox{\textsc{gem}}}
\newcommand{\abr}[1]{\textsc{#1}}
\newcommand{\hidetext}[1]{}
\newcommand{\ignore}[1]{}
\newcommand{\pinaforecomment}[3]{\colorbox{#1}{\parbox{.8\linewidth}{#2: #3}}}
\newcommand{\pinaforecomment}[3]{}
\newcommand{\smallurl}[1]{ \begin{tiny}\url{#1}\end{tiny}}
\definecolor{lightblue}{HTML}{3cc7ea}
\definecolor{CUgold}{HTML}{CFB87C}
\definecolor{grey}{rgb}{0.95,0.95,0.95}
\definecolor{ceil}{rgb}{0.57, 0.63, 0.81}
\definecolor{UMDred}{HTML}{ed1c24}
\definecolor{UMDyellow}{HTML}{ffc20e}
\newcommand{\qa}[0]{\abr{qa}}
\newcommand{\cmark}{\ding{51}}
\newcommand{\xmark}{\ding{55}}
\newcommand{\entity}[1]{\underline{#1}}
\newif\ifsubscripterror\subscripterrortrue
\newcommand{\err}[1]{\textsubscript{~$\pm$#1}}
\newcommand{\err}[1]{ $\pm$ #1}
\newcommand\sqa{\textsc{SQA}\xspace}
\newcommand\tabfact{\textsc{TabFact}\xspace}
\newcommand\wtq{\textsc{WikiTQ}\xspace}
\newcommand\hqa{\textsc{HybridQA}\xspace}
\newcommand{\hybrider}{\textsc{hybrider}\xspace}
\newcommand{\pointr}{\textsc{PointR}\xspace}
\newcommand{\etc}{\textsc{etc}\xspace}
\newcommand{\tableetc}{\textsc{TableEtc}\xspace}
\newcommand{\bigbird}{\textsc{BigBird}\xspace}
\newcommand{\tapas}{\textsc{Tapas}\xspace}
\newcommand{\linformer}{\textsc{Linformer}\xspace}
\newcommand{\bert}{\textsc{Bert}\xspace}
\newcommand{\sat}{\textsc{SAT}\xspace}
\newcommand{\tabert}{\textsc{TaBert}\xspace}
\newcommand{\reformer}{\textsc{Reformer}\xspace}
\newcommand{\model}{\textsc{Mate}\xspace}
\title{MATE: Multi-view Attention for Table Transformer Efficiency}
\author{
Julian Martin Eisenschlos$^1$,
Maharshi Gor$^2$\thanks{\,\,\,Work done at Google Research.},
Thomas M{\"u}ller$^3$\footnotemark[1],
William W. Cohen$^1$\\
Google Research$^1$ \\
\texttt{\{eisenjulian,wcohen\}@google.com}\\
Dept. of Computer Science, University of Maryland$^2$ \\
\texttt{mgor@cs.umd.edu}\\
Symanto Research, Valencia, Spain$^3$ \\
\texttt{thomas.mueller@symanto.com}
}
\begin{document}

\newtheorem{theorem}{Theorem}
\newtheorem*{theorem*}{Theorem}
\newtheorem{lemma}[theorem]{Lemma}

\maketitle

\begin{abstract}
    This work presents a sparse-attention Transformer architecture for modeling documents that contain large tables.
Tables are ubiquitous on the web, and are rich in information. However,
more than 20\% of relational tables on the web have 20 or more rows~\cite{cafarella-2008-relweb}, and these large tables present a challenge for current Transformer models, which are typically limited to 512 tokens.
Here we propose \model, a novel Transformer architecture designed to model the structure of web tables. \model uses sparse attention in a way that allows heads to efficiently attend to either rows or columns in a table.
This architecture scales linearly with respect to speed and memory, and can handle documents containing more than 8000 tokens with current accelerators.
\model also has a more appropriate inductive bias for tabular data, and sets a new state-of-the-art for three table reasoning datasets.
For \hqa~\cite{chen-etal-2020-hybridqa}, a dataset that involves large documents containing tables, we improve the best prior result by $19$ points.

\end{abstract}

\section{Introduction}
\label{sec:intro}

The Transformer architecture~\cite{vaswani17} is expensive to train and run at scale, especially for long sequences, due to the quadratic asymptotic complexity of self-attention.  Although some work addresses this limitation \cite{ainslie-etal-2020-etc, kitaev2020reformer, zaheer2020bigbird}, there has been little prior work
on scalable Transformer architectures for \emph{semi-structured} text.\footnote{The term "semi-structured text" refers to text that has structure that does not reflect a known data schema.  Typically semi-structured text is organized as an \abr{html} tree or variable length lists and tables.}  However, although some of the more widely used benchmark tasks involving semi-structured data have been restricted to moderate size tables, many semi-structured documents are large: more than 20\% of relational tables on the web have 20 or more rows~\cite{cafarella-2008-relweb}, and would pose a problem for typical Transformer models.

\begin{figure}
    \centering
    \includegraphics[width=.84\linewidth]{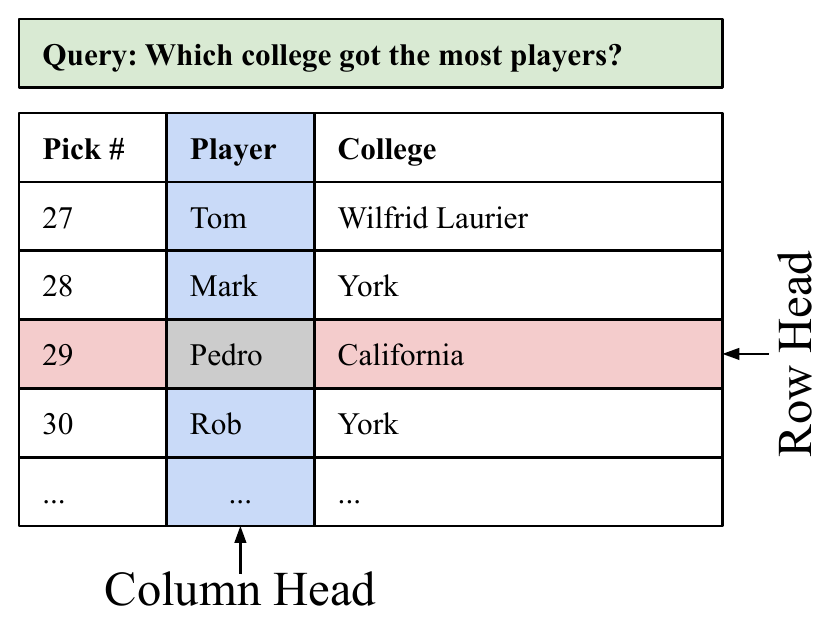}
    \caption{Sparse self-attention heads on tables in \model are of two classes: \emph{Row heads} attend to tokens inside cells in the same row, as well as the query. \emph{Column heads} attend to tokens in the same column and in the query. Query tokens attend to all other tokens.}
    \label{fig:attention}
\end{figure}

Here we study how efficient implementations for transformers can be tailored to semi-structured data.
Figure~\ref{fig:attention} highlights our main motivation through an example:
to obtain a contextual representation of a cell in a table, it is unlikely that the information in a completely different row and column is needed.

We propose the \model architecture\footnote{Pronounced \emph{mah-teh}, as in mate tea.} (Section~\ref{sec:models}), which allows each attention head to reorder the input so as to traverse the data by multiple points of view, namely column or row-wise (Figure~\ref{fig:efficiency}).
This allows each head to have its own data-dependent notion of locality, which enables the use of sparse attention in an efficient and context-aware way.

\begin{figure*}[ht]
    \centering
    \includegraphics[width=.95\linewidth]{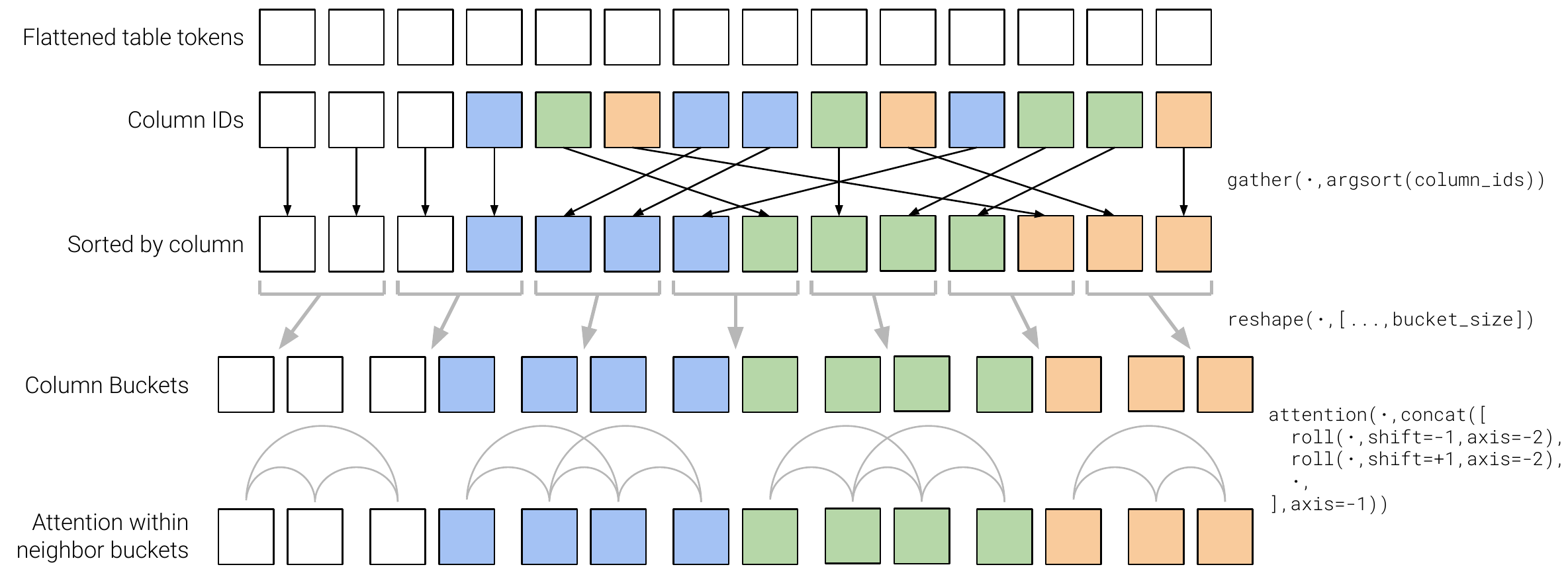}
    \caption{Efficient implementation for \model. Each attention head reorders the tokens by either column or row index and then applies a windowed attention mechanism. This figure omits the global section that attends to and from all other tokens. Since column/row order can be pre-computed, the method is linear for a constant block size.}
    \label{fig:efficiency}
\end{figure*}

This work focuses on question answering (\qa) and entailment tasks on tables. While we apply our model to several such tasks (see section~\ref{sec:results}), \hqa~\cite{chen-etal-2020-hybridqa} is particularly interesting, as it requires processing tables jointly with long passages associated with entities mentioned
in the table, yielding large documents that may not fit in standard Transformer models.

Overall, our contributions are the following:

i) We show that table transformers naturally focus attention according to rows and columns, and that constraining attention to enforce this improves accuracy on three table reasoning tasks, yielding new state-of-the-art results in \sqa and \tabfact.

ii) We introduce \model, a novel transformer architecture that exploits table structure to allow running training and inference in longer sequences. Unlike traditional self-attention, \model scales linearly in the sequence length.

iii) We propose \pointr (Section~\ref{sec:pointr}), a novel two-phase framework that exploits \model to tackle large-scale \qa{} tasks, like \hqa, that require multi-hop reasoning over tabular and textual data. We improve the state-of-the-art by $19$ points.

All the code is available as open source.\footnote{\href{https://github.com/google-research/tapas}{github.com/google-research/tapas}}

\section{Related Work}
\label{sec:related}

\paragraph{Transformers for tabular data}
Traditionally, tasks involving tables were tackled  by searching for logical forms
in a semantic parsing setting. More recently Transformers~\cite{vaswani17} have been used
to train end-to-end models on tabular data as well~\cite{Chen2020TabFact}.
For example, \tapas~\cite{herzig-etal-2020-tapas} relies on Transformer-based masked language model pre-training and special row and column embeddings
to encode the table structure. \citet{chen2021open} 
use a variant of \etc ~\cite{ainslie-etal-2020-etc} 
on an open-domain version of \hqa
to read and choose an answer span from multiple candidate passages and cells, but the proposed model does not jointly process the full table with passages.

In order to overcome the limitations on sequence length~\citet{eisenschlos-etal-2020-understanding}
propose heuristic column selection techniques, and they also propose pre-training on synthetic data. 
\citet{krichene-etal-2021-dot} propose a model based cell selection technique that is differentiable and trained end-to-end together with the main task model.
Our approach is
orthogonal to these methods, and can be usefully combined with them, as shown in Table~\ref{tab:other_results}.

Recently, \citet{zhang-etal-2020-table}  proposed \sat, which uses an attention mask
to restrict attention to tokens in the same
row and same column. \sat also computes an additional histogram row appended at the bottom of the table and encodes the table content as text only (unlike \tapas). The proposed method is not head dependent
as ours is, which prevents it from being implemented efficiently to allow scaling to larger
sequence lengths. Controlling for model size and pre-training for a fair comparison, we show that our model is both faster (Table~\ref{fig:speed}) and more accurate (Table~\ref{tab:recall}) than \sat.

\paragraph{Efficient Transformers}
There is prior work that tries to improve the asymptotic
complexity of the self-attention mechanism in transformers.
\citet{tay2020efficient} review the different methods and cluster them based
on the nature of the approach. We cover some of the techniques below and show a theoretical complexity comparison in Table~\ref{tab:complexity}.

\begin{table}[t]
    \centering
    \begin{tabular}{lccl}
    \hline
       Model  &  Complexity & Class\\
       \hline
         Transformer-XL
         & $\mathcal{O}(n^2)$ & RC \\ 
        \reformer 
        & $\mathcal{O}(n \log n)$ & LP\\
           \linformer 
           & $\mathcal{O}(n)$ & LR \\
        \bigbird 
        & $\mathcal{O}(ng + nb)$ & FP+RP+M\\ 
        \etc 
        & $\mathcal{O}(ng + nb)$ & FP+M\\ 
        \model (Ours) & $\mathcal{O}(ng + nb)$ & FP\\ 
        \hline
    \end{tabular}
    \caption{Comparison of transformer models following~\citet{tay2020efficient}. Class abbreviations include: FP = Fixed Patterns, RP = Random Patterns, M = Memory, LP = Learnable Pattern, LR = Low Rank and RC = Recurrence. The block size for local attention is denoted as $b$ and $g$ the size of the global memory. For our \model model, a $n \log n$ sorting step can be pre-computed before training or inference for known tables so it is omitted.}
    \label{tab:complexity}
\end{table}

The \linformer model~\citet{wang2020linformer} uses learned projections to reduce the sequence length
axis of the keys and value vectors to a fixed length. The projections are then
anchored to a specific input length which makes adapting the sequence length
during pre-training and fine-tuning challenging, and makes the model more sensitive to position offsets in sequences of input tokens.

\reformer ~\citep{kitaev2020reformer} uses locality sensitive hashing to reorder the input
tokens at every layer in such a way that similar contextual embeddings have
a higher chance of being clustered together. %
We instead rely on the input data structure to define ways to cluster the tokens.
Although the limitation can be circumvented by adapting the proposed architecture, \reformer was originally defined for
auto-regressive training.

~\citet{ainslie-etal-2020-etc} introduce \etc, a framework for global memory
and local sparse attention, and use the mechanism of relative positional attention~\cite{dai-etal-2019-transformer}
to encode hierarchy.
\etc was applied to large document tasks such as Natural Questions~\cite{kwiatkowski-19}.
The method does not allow different dynamic or static data re-ordering.
In practice, we have observed that the use of relative positional attention
introduces a large overhead during training. 
\bigbird~\cite{zaheer2020bigbird} presents a similar approach with the addition of attention to random tokens. 

\section{The \model model}
\label{sec:models}

Following \abr{tapas}~\cite{herzig-etal-2020-tapas}, the transformer input in \model, for each table-QA example, is the query and the table, tokenized and flattened, separated by a \texttt{[SEP]} token, and prefixed by a \texttt{[CLS]}. Generally the table comprises most of the the input. We use the same row, column and rank embeddings as \tapas.

To restrict attention between the tokens in the table, we propose having some attention heads limited to attending between tokens in the same row (plus the non-table tokens), and likewise for columns.
We call these \emph{row heads} and \emph{column heads} respectively.
In both cases, we allow attention to and from all the non-table tokens.

Formally, if $\mathbf X\in\mathbb R^{d\times n}$ is the input tensor for a Transformer layer with sequence length $n$, the $k$-th position of the output of the $i$-th attention head is:
\vspace{-1mm}
\[
\text{Head}^i_k\left(\mathbf X\right) = \mathbf W^i_V \mathbf{X}_{\mathcal{A}^i_k}
\sigma\left[
\left(\mathbf{W}^i_K \mathbf{X}_{\mathcal{A}^i_k}\right)^\intercal\mkern-2mu
\mathbf W^i_Q \mathbf X_k
\right]
\vspace{-1mm}
\]
where $\mathbf W^i_Q, \mathbf W^i_K, \mathbf W^i_V \in \mathbb R^{m\times d}$ are query, key and value projections respectively, $\sigma$ is the \texttt{softmax} operator, and $\mathcal{A}^i_k \subseteq \left\{1,\cdots, n\right\}$ represents the set of tokens that position $k$ can attend to, also known as the \emph{attention pattern}. 
Here $\mathbf X_{\mathcal{A}^i_k}$ denotes gathering from $\mathbf X$ only the indexes in $\mathcal{A}^i_k$.
When $\mathcal{A}^i_k$ contains all positions (except padding) for all heads $i$ and token index $k$ then we are in the standard dense transformer case. For a token position $k$, we define $r_k, c_k \in \mathbb N_0$ the row and column number, which is set to $0$ if $k$ belongs to the query set $Q$: the set of token positions in the query text including \texttt{[CLS]} and \texttt{[SEP]} tokens.

In \model, we use two types of attention patterns. The first $h_r\geq 0$ heads are \emph{row heads} and the remaining $h_c$ are \emph{column heads}:
\vspace{-2.5mm}
\[
\mathcal{A}^i_k = \begin{cases}
\left\{1,\cdots,n\right\} &~\text{if}~ k \in Q\text{, else} \\
Q \cup \left\{j : r_j = r_k\right\} &~\text{if}~ 1 \leq i \leq h_r \\
Q \cup \left\{j : c_j = c_k\right\} &~\text{otherwise.}
\end{cases}
\]

One possible implementation of this is an attention mask that selectively sets elements in the attention matrix to zero. (Similar masks are used for padding tokens, or auto-regressive text generation.) 
The ratio of row and column
heads is a hyper-parameter but empirically we found a $1:1$ ratio to work well.
In Section~\ref{sec:results}
we show that attention masking improves accuracy on four table-related tasks.
We attribute these improvements to better inductive bias, and support this in Section~\ref{sec:analysis} showing that full attention models learn to approximate this behavior.

\subsection{Efficient implementation}

Although row- and column-related attention masking improves accuracy, it does not improve Transformer efficiency---despite the restricted attention, the Transformer still uses quadratic memory and time.  We thus also present an approximation of row and column heads that can be implemented more efficiently.
Inspired by~\citet{ainslie-etal-2020-etc}, the idea is to divide the input into
a global part of length $G$ that attends to and from everything, and a local (typically longer) part that attends only to the global section and some radius $R$ around each token in the  sequence. 
\etc does this based on a fixed token order.
However, the key insight used in \model is that the notion of locality can be configured \textit{differently for each head}: one does not need to choose a specific traversal order for tokens ahead of time, but instead tokens can be ordered in a data-dependent (but deterministic) way.
In particular, row heads can order the input according to a row order traversal of the table, and column heads can use a column order traversal.
The architecture is shown in Figure~\ref{fig:efficiency}.

After each head has ordered its input we split off the first $G$ tokens and group the rest in evenly sized buckets of length $R$.
By reshaping the input matrix in the self-attention layer to have $R$ as the last dimension,
one can compute attention scores from each bucket to itself, or similarly from each bucket to an adjacent one.
Attention is further restricted with a mask to ensure row heads and column heads don't attend across rows and columns respectively. See model implementation details in Appendix~\ref{sec:apx-code}.
When $G$ is large enough to contain the question part of the input and $R$ is large enough to fit an entire column or row, then the efficient implementation matches the mask-based one.

As observed in ~\citet{ainslie-etal-2020-etc},  asymptotic complexity improvements
often do not materialize for small sequence lengths, given the overhead of
tensor reshaping and reordering. The exact break-even point will depend on several factors,
including accelerator type and size as well as batch size. 
In the experiments below the best of the two functionally equivalent implementations of \model is chosen for each use case.

\subsection{Compatibility with \bert weights}

The sparse attention mechanism of \model 
adds no additional parameters. As a consequence, a \model checkpoint is compatible with any \bert or \tapas pre-trained checkpoint. Following~\citet{herzig-etal-2020-tapas} we obtained best results running the same masked language model pre-training used in \tapas with the same data but using the sparse attention mask of \model.

For sequence lengths longer than $512$ tokens, we reset the index of the positional embeddings at the beginning of each cell.
This method removes the need to learn positional embeddings for larger indexes as the maximum sequence length grows while avoiding the large computational cost of relative positional embeddings.

\subsection{Universal approximators}

\citet{Yun2020Are} showed that Transformers are universal approximators for any continuous sequence-to-sequence function, given sufficient layers.
This result was further extended by ~\citet{univapprox2020, zaheer2020bigbird} to some Sparse Transformers under reasonable assumptions. 

However, prior work limits itself to the case of a single attention pattern per layer, whereas \model uses different attention patterns depending on the head. We will show that \model is also a universal approximator for sequence to sequence functions.

Formally, let $\mathcal{F}$ be the class of continuous functions $f:\mathbb{D}\subset \mathbb{R}^{d\times n}\to \mathbb{R}^{d\times n}$
with $\mathbb{D}$ compact, with the $p$-norm $|\!|\cdot|\!|_p$. Let $\mathcal{T}_\model$ be any family of transformer models with a fixed set of hyper-parameters (number of heads, hidden dimension, etc.) but with an arbitrary number of layers. Then we have the following result.

\begin{theorem}
If the number of heads is at least $3$ and the hidden size of the feed forward layer is at least $4$, then for any $f\in \mathcal{F}$ and 
$\epsilon \in \mathbb{R}_+$ 
there exists $\hat{f}\in\mathcal{T}_\model$ such that $|\!|\hat{f} - {f}|\!|_p < \epsilon$.
\label{theo:universal}
\end{theorem}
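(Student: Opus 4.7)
The strategy is to reduce Theorem~\ref{theo:universal} to the existing universal approximation results for sparse Transformers, most directly the framework of \citet{zaheer2020bigbird} (which itself builds on \citet{Yun2020Are}). Their template establishes universality for a sparse attention scheme whenever (i) the union of the attention patterns used across heads and layers induces a directed graph on token positions that admits paths between every pair of positions (so that a contextual mapping can propagate information globally), and (ii) there exists at least one global token that attends to and is attended by every other position (to anchor the construction of unique encodings). I would verify that \model satisfies both conditions, then lift the existing construction.

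First I would formalize the attention graph of \model. With $h_r\geq 1$ row head and $h_c\geq 1$ column head, the union of the per-head attention patterns $\mathcal{A}^i_k$ contains, for every table token at position $k$ with coordinates $(r_k,c_k)$: all positions in the same row, all positions in the same column, and the entire query segment $Q$. The \texttt{[CLS]} token in $Q$ plays the role of a global token exactly as in \etc or \bigbird, since query tokens attend to and from all positions. Hence condition (ii) is immediate. For condition (i), note that any two table positions $k,k'$ with coordinates $(r_k,c_k)$ and $(r_{k'},c_{k'})$ are connected in two hops via the intermediate position $(r_k,c_{k'})$: a column head sends information from $k$ to $(r_k,c_{k'})$, and a row head then sends it to $k'$. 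The query segment gives an alternative one-hop path through any global token. Thus the joint attention graph is strongly connected in a bounded number of hops, with the hop bound independent of $n$.

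Next I would adapt the three-step construction of \citet{Yun2020Are} as modified by \citet{zaheer2020bigbird}: (a) approximate $f\in\mathcal{F}$ by a piecewise constant function on a fine grid in $\mathbb{D}$; (b) use the attention mechanism, together with the feed-forward layers of hidden width $\geq 4$, to implement a contextual mapping that assigns every quantized input a unique fingerprint; (c) use additional feed-forward layers to send fingerprints to the desired outputs. Steps (a) and (c) are independent of the attention pattern and carry over verbatim. For step (b), the only place where the single-pattern assumption of prior work is used is in the construction of the "selective shift" operation, which iteratively routes information along the attention graph. Since \model provides at least $3$ heads, I can dedicate one row head and one column head to implementing the two-hop route described above (and use the third head, or the global route through $Q$, as needed for the shift-and-gather argument), so the selective shift extends without modification. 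This is the step that requires the assumption $h_r+h_c\geq 3$ and matches the BigBird hypothesis of at least one global token plus sufficient structural heads.

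\textbf{Main obstacle.} The delicate point is not the connectivity argument but that prior universal-approximation proofs assume a \emph{uniform} attention pattern shared by all heads in a given layer, whereas in \model the row and column patterns coexist inside the same layer. I expect the main technical work to be checking that the contextual mapping construction of \citet{zaheer2020bigbird} still produces injective fingerprints when heads in the same layer use different sparsity patterns; concretely, one must ensure that the per-head "selective shift" operations do not interfere destructively before being combined into the layer output. I would handle this by choosing the query/key/value projections of the unused heads in each construction layer to output zero (or a harmless bias), so that only the intended head contributes, effectively serializing the multi-pattern layer into a sequence of single-pattern sub-steps and reducing the situation exactly to the setting of the prior theorems.
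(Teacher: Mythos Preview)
Your proposal is essentially correct and even lands on the key technical device the paper uses---zeroing out the value projections of unneeded heads---but you get there by a substantially longer route than the paper does.

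The paper's argument is almost a one-liner once you notice a pigeonhole fact you skipped: with at least $3$ heads and only $2$ head types (row and column), some two heads must share the \emph{same} attention pattern. The paper then sets $\mathbf W_V^i=0$ for every other head, leaving a transformer with a single sparse attention pattern and two active heads. That pattern automatically satisfies Assumption~1 of \citet{univapprox2020} because the \texttt{[CLS]} token attends to and from everything and every token attends to itself, so Theorem~1 of \citet{univapprox2020} applies verbatim. No row--column hop argument, no BigBird-style connectivity graph, and---crucially---no need to confront your ``main obstacle'' at all, since after the reduction every active head in every layer uses the same pattern.

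Your plan instead keeps one row head and one column head live, establishes two-hop connectivity through the table grid, and then reworks the selective-shift construction for layers whose heads carry different patterns. That is plausible and arguably more faithful to the spirit of \model, but it is strictly more work: you have to re-verify the injectivity of the contextual mapping under mixed-pattern heads, whereas the paper sidesteps this entirely. A minor caveat in your write-up is the assumption $h_r\geq 1$ and $h_c\geq 1$; the theorem statement does not guarantee both types are present, though in the degenerate all-one-type case your argument collapses to the paper's anyway. In short: your approach would work, but the paper's pigeonhole reduction is the cleaner and shorter proof.
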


See the Appendix~\ref{sec:apx-universal} for a detailed proof, which relies on the fact that $3$ heads will guarantee at least two heads of the same type. The problem can then be reduced to the results of~\citet{univapprox2020}.

\section{The \pointr architecture}
\label{sec:pointr}
\begin{table}[t]
\small
\centering
\begin{tabular}{lrrrr}
\toprule
Split & Train & Dev & Test & Total \\
\midrule
In-Passage  &  35,215 & 2,025  & 20,45  & 39,285 %
\\
In-Table & 26,803 & 1,349 & 1,346 & 29,498 %
\\
Missing & 664  & 92 & 72 & 828 %
\\
Total & 62,682 & 3,466 & 3,463 & 69,611 \\
\bottomrule
\end{tabular}
\caption{Statistics for \hqa. \emph{In-Table} and \emph{In-Passage} groups mark the location of the answer. Missing denotes answers that do not match any span and may require complex computations.}
\label{tab:hybridqa_data_split}
\end{table}

Many standard table QA datasets~\cite{pasupat2015compositional, Chen2020TabFact, iyyer-etal-2017-search}, perhaps by design, use tables that can be limited to $512$ tokens.
Recently, more datasets~\cite{kardas-etal-2020-axcell, talmor2021multimodalqa} requiring parsing larger semi-structured documents have been released.

\begin{figure*}[th]
\begin{tabular}{p{14cm}}
\textbf{Original Input Table:} \\
\end{tabular}
\centering
\small
\begin{tabular}{llllll}
\textbf{Pos} &	\textbf{No} & \textbf{Driver} & \textbf{Constructor} & \textbf{Time} & \textbf{Gap} \\
\hline
1	&2	&\underline{Rubens Barrichello}	&\underline{Ferrari} 	&1:10.223	- \\
2	&1	&\underline{Michael Schumacher}	&\underline{Ferrari}	&1:10.400	&+0.177 \\
3	&10	&\underline{Takuma Sato}	&\underline{Honda}	&1:10.601	&+0.378 \\
4	&9	&\underline{Jenson Button}~$\star$	&\underline{Honda}	&1:10.820	&+0.597 \\
\end{tabular}%
\\
\vspace{6pt}
\begin{tabular}{p{14cm}}
\textbf{Entity Descriptions (some are omitted for space):} \\

$\bullet$~\emph{Rubens Barrichello} is a Brazilian racing driver who competed in Formula One between 1993 and 2011. \\
$\bullet$~\emph{Jenson Alexander Lyons Button} MBE is a British racing driver. He won the 2009 F1 World Championship. \\
$\bullet$~\emph{Scuderia Ferrari S.p.A.} is the racing division of luxury Italian auto manufacturer Ferrari. Ferrari supplied cars complete with V8 engines for the A1 Grand Prix series from the 2004 season. \\
$\bullet$~\emph{Honda Motor Company, Ltd} is a Japanese public multinational conglomerate manufacturer of automobiles, motorcycles, and power equipment, headquartered in Minato, Tokyo, Japan. \\
\hspace{7cm}\vdots \\
\textbf{Question:} \\
The driver who finished in position 4 in the 2004 Grand Prix was of what nationality? \textbf{British} \\
\vspace{-1mm}
\hspace{7cm}$\big\Downarrow$ \\
\vspace{-1mm}
\textbf{Expanded Table with $k$ Description Sentences Most Similar to the Question:} \\
\resizebox{14cm}{!}{
\begin{tabular}{llp{6cm}p{7cm}ll}
\textbf{Pos} &	\textbf{No} & \textbf{Driver} & \textbf{Constructor} & \textbf{Time} & \textbf{Gap} \\
\hline
1	&2	& Rubens Barrichello	& Ferrari (Ferrari supplied cars complete with V8 engines for the A1 Grand Prix series from the 2004 season.) 	&1:10.223	- \\
2	&1	& Michael Schumacher	& Ferrari (Ferrari supplied cars complete with V8 engines for the A1 Grand Prix series from the 2004 season.)	&1:10.400	&+0.177 \\
3	&10	& Takuma Sato	& Honda	& 1:10.601	&+0.378 \\
4	&9	& Jenson Button (Jenson Alexander Lyons Button MBE is a \textbf{British} racing driver.)~$\star$	& Honda	&1:10.820	&+0.597 \\
\end{tabular}}
\\
\vspace{5pt}
\textbf{\pointr inference pipeline:} \\
\centering
\includegraphics[width=.85\linewidth]{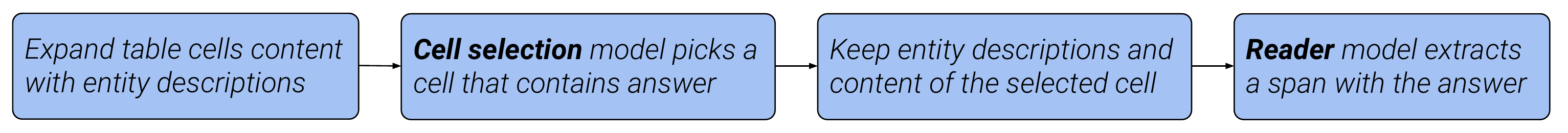}
\end{tabular}%
\caption{An example from the \hqa dataset processed by \pointr. The first paragraph in the Wikipedia page for each underlined entity was available to the dataset authors. 
We expand the text in the cells with this descriptions for the top-$k$ most relevant sentences, as shown in the second table, and train a model to find the cell containing or linking to the answer (marked here with a $\star$). The goal is to provide the model with all the context needed to locate the answer. A second model extracts a span from the selected cell content and linked text.}
\label{fig:hybridqa_example}
\end{figure*}

Among them, we focus on \hqa~\cite{chen-etal-2020-hybridqa}. 
It uses Wikipedia tables with entity links, with answers taken from either a cell or a hyperlinked paragraph.
Dataset statistics are shown in Table~\ref{tab:hybridqa_data_split}. Each question contains a table with on average $70$ cells and $44$ linked entities. Each entity is represented by the first $12$ sentences of the Wikipedia description, averaging $100$ tokens. The answer is often a span extracted from the table or paragraphs but the dataset has no ground truth annotations on how the span was obtained, leaving around $50\%$ of ambiguous examples where more than one answer-sources are possible.
The total required number of word pieces accounting for the table, question and entity descriptions grows to more than $11,000$ if one intends to cover more than $90\%$ of the examples, going well beyond the limit of traditional transformers.

To apply sparse transformers to the \hqa task, we propose \pointr, a two stage framework in a somewhat similar
fashion to open domain QA pipelines~\cite{chen-etal-2017-reading, lee-19}.
We \emph{expand} the cell content by appending the descriptions of its linked entities.
The two stages of \pointr correspond to (Point)ing to the correct \emph{expanded cell} and then (R)eading a span from it. 
See Figure~\ref{fig:hybridqa_example} for an example. Full set-up details are discussed in Appendix~\ref{sec:apx-exp}.

\subsection{\pointr: Cell Selection Stage}

In the first stage we train a cell selection model using \model whose objective is to select the expanded cell that contains the answer. 
\model accepts the full table as input;
therefore, expanding all the cells with their respective passages is impractical.
Instead, we consider the top-$k$ sentences in the entity descriptions for expansion, using a \abr{tf-idf}
metric against the query. Using $k=5$, we can fit $97\%$ of the examples in $2048$ tokens; for the remaining examples, we truncate the longest cells uniformly until they fit in the budget. 

The logit score $S$ for each cell $c$ is obtained by mean-pooling the logits for each of the tokens $t$ inside it, which are in turn the result of applying a single linear layer to the contextual representation of each token when applying \model to the query $q$ and the expanded table $e$.
\begin{align*}
S(t)&=\texttt{MLP}(\texttt{MATE}(q, e)[t]) \\
S(c)&=\texttt{avg}_{t\in c} S(t) \\
P(c)&=\frac{\texttt{exp}(S(c))}
          {\sum_{c'\in e} \texttt{exp}(S(c'))}
\end{align*}
We use cross entropy loss for training the model to select expanded cells that contain the answer span.
Even though the correct span may appear in multiple cells or passages, in practice many of these do so only by chance and do not correspond to a reasoning path consistent with the question asked. 
In Figure~\ref{fig:hybridqa_example} for instance, there could be other British divers but we are only interested in selecting the cell marked with a \emph{star} symbol ($\star$).
In order to handle these cases we rely on \emph{Maximum Marginal Likelihood} (\abr{mml})~\cite{liang-etal-2013-learning, berant-etal-2013-semantic}. As shown by ~\citet{guu-etal-2017-language} \abr{mml} can be interpreted as using the online model predictions (without gradients) to compute a soft label distribution over candidates. 
For an input query $x$, and a set $\mathcal C$ of candidate cells, the loss is:
\[
\mathcal L(\Theta, x, \mathcal{C}) = \sum_{z\in\mathcal C} - q(z) \log p_\Theta(z|x)
\]
with $q(z) = p_\Theta(z|x, z\in\mathcal{C})$ the probability distribution given by the model restricted to candidate cells containing the answer span, taken here as a constant with zero gradient.
\subsection{\pointr: Passage Reading Stage}
In the second stage we develop a span selection model that reads the answer from a single \emph{expanded} cell selected by the \pointr Cell Selector.
In order to construct the expanded cell for each example, we concatenate the cell content with all the sentences of the linked entities
and keep the first $512$ tokens.

Following various recent neural machine reading works~\cite{chen-etal-2017-reading, lee-19, herzig-etal-2021-open}, we fine-tune a pre-trained \bert-uncased-large model~\cite{devlin-19} that attempts to predict a text span from the text in a given table cell $c$ (and its linked paragraphs) and the input query $q$. 
We compute a span representation as the concatenation of the contextual embeddings of the first and
last token in a span $s$ and score it using a multi-layer perceptron:
\begin{align*}
    h_{start} &= \texttt{BERT}_r(q, c)[\texttt{START}(s)] \\
    h_{end} &= \texttt{BERT}_r(q, c)[\texttt{END}(s)] \\
    \texttt{S}_\text{read}(q, c) &= \texttt{MLP}([h_{start}, h_{end}])
\end{align*}

A softmax is computed over valid spans in the input and the model is trained with cross entropy loss. If the span-text appears multiple times in a cell we consider only the first appearance. 
To compute EM and F1 scores during inference, we evaluate the trained reader on the highest ranked cell output predictions of the \pointr Cell Selector
using the official evaluation script.

\section{Experimental Setup}
\label{sec:experiments}

We begin by comparing the performance of \model on \hqa to other existing systems. We focus on prior efficient transformers to compare the benefits of the table-specific sparsity. We follow ~\citet{herzig-etal-2020-tapas, eisenschlos-etal-2020-understanding} in reporting error bars with the interquartile range.

\begin{table}[t]
\begin{center}
\scalebox{0.85}{
\begin{tabular}{lrrr}
\toprule
 & \tabfact & \wtq & \sqa \\
\midrule
Examples         & 118,275      & 22,033  & 17,553  \\
Tables           & 16,573      & 2,108   & 982 \\ 
\bottomrule
\end{tabular}
}
\end{center}
\caption{Statistics for \sqa, \wtq and \tabfact.}
\label{tab:dataset_stats}
\end{table}

\subsection{Baselines}

The first baselines for \hqa are Table-Only and Passage-Only, as defined in ~\citet{chen-etal-2020-hybridqa}. 
Each uses only the part of the input indicated in the name but not both at the same time. 
Next, the \hybrider model from the same authors, consists of four stages: entity linking, cell ranking, cell hopping and finally a reading comprehension stage, equivalent to our final stage. 
The first three stages are equivalent to our single cell selection stage; hence, we use their reported error rates to estimate the retrieval rate. 
The simpler approach enabled by \model avoids error propagation and yields improved results. 

We also consider two recent efficient transformer architectures as alternatives for the \pointr Cell Selector,
one based on \linformer~\cite{wang2020linformer} and one based on \etc~\cite{ainslie-etal-2020-etc}. In both cases we preserve the row, column and rank embeddings introduced
by ~\citet{herzig-etal-2020-tapas}.
\linformer learns a projection matrix that reduces the sequence length dimension of the keys and values tensor to a fixed length of $256$ (which performed better than $128$ and $512$ in our tests.)
\etc is a general architecture which requires some choices to be made about how to allocate global memory and local attention~\cite{dai-etal-2019-transformer}
Here we use a $256$-sized global memory to summarize the content of each cell, by assigning each token in the first half of the global memory to a row,
and each token in the second half to a column.
Tokens in the input use a special
relative positional value to mark when they are interacting with their corresponding
global row or column memory position. We will refer to this model as \tableetc.

Finally we consider two non-efficient models: A simple \tapas model without any sparse mask, and an \sat~\cite{zhang-etal-2020-table} model pretrained on the same MLM task as \tapas for a fair comparison. For the cell selection task \tapas obtains similar results to \model, but both \tapas and \sat lack the efficiency improvements of \model. 

\subsection{Other datasets}

We also apply \model to three other datasets involving tables to demonstrate that the sparse attention bias yields stronger table reasoning models.
\sqa \cite{iyyer-etal-2017-search} is a sequential QA task, \wtq~\cite{pasupat2015compositional} is a QA task that sometimes also requires aggregation of table cells, and \tabfact~\cite{Chen2020TabFact} is a binary entailment task. See Table~\ref{tab:dataset_stats} for dataset statistics. 
We evaluate with and without using the intermediate pre-training tasks (CS)~\cite{eisenschlos-etal-2020-understanding}.
\section{Results}
\label{sec:results}

In Figure~\ref{fig:speed} we compare inference speed of different models as we increase the sequence length. 
Similar results showing number of FLOPS and memory usage are in Appendix~\ref{sec:apx-exp}.
The linear scaling of \linformer and the linear-time version \model can be seen clearly.
Although \linformer has a slightly smaller linear constant, the pre-training is 6 times slower, as unlike the other models, \linformer pretraining must be done at the final sequence length of $2048$.

\begin{figure}
    \centering
    \includegraphics[width=.95\linewidth]{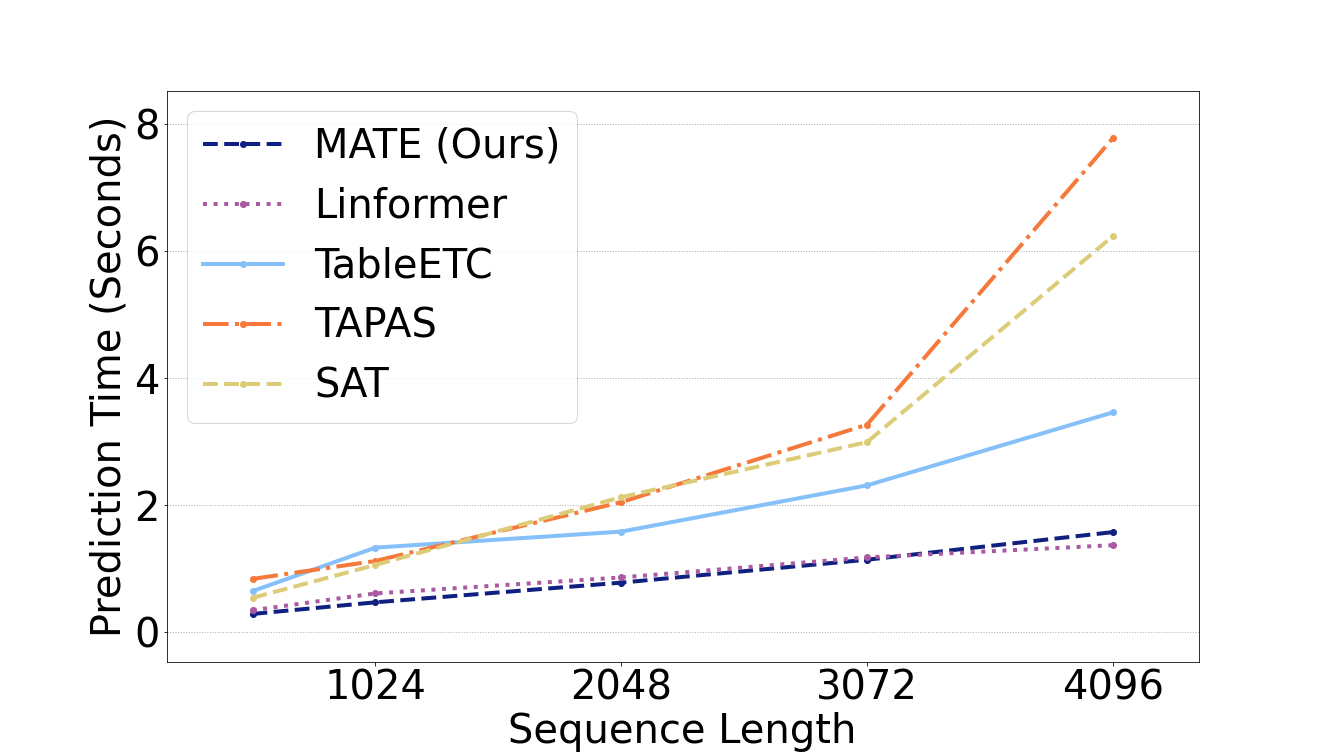}
    \caption{Comparison of inference speed on a cloud VM with 64GB. At a sequence length of 2048, \model is nearly twice as fast as \tapas.}
    \label{fig:speed}
\end{figure}

\begin{table}[t]
\small
\centering
\resizebox{1\columnwidth}{!}{
\begin{tabular}{l|cccc}
\toprule
Model & \sqa \abr{all} & \sqa \abr{seq} & \wtq  & \tabfact \\
\midrule
\tapas & 67.2\err{0.5} & 40.4\err{0.9} & 42.6\err{0.8} & 76.3\err{0.2} \\
\model & \textbf{71.6}\err{0.1} & \textbf{46.4}\err{0.3} & \textbf{42.8}\err{0.8} & \textbf{77.0}\err{0.3} \\
\midrule
\tapas + CS & 71.0\err{0.4} & 44.8\err{0.8} & 46.6\err{0.3} & 81.0\err{0.1} \\
\model + CS & \textbf{71.7}\err{0.4} & \textbf{46.1}\err{0.4} & \textbf{51.5}\err{0.2} & \textbf{81.4}\err{0.1} \\
\bottomrule
\end{tabular}
}
\caption{Test results of using \model on other table parsing datasets show improvements due to the sparse attention mechanism. Using \emph{Counterfactual + Synthethic} pretraining (CS) in combination with \model achieves state-of-the-art in \sqa and \tabfact. Errors are estimated with half the interquartile range over 5 runs.}
\label{tab:other_results}
\end{table}

Table~\ref{tab:results} shows the end-to-end results of our system using \pointr with \model on \hqa, compared to the previous state-of-the-art as well as the other efficient transformer baselines from  Section ~\ref{sec:experiments}.
\model outperforms the previous SOTA \hybrider by over 19 points, and \linformer, the next best efficient-transformer system, by over 2.5 points, for both exact-match accuracy and F1.

\begin{table*}[!ht]
\small
\centering
\resizebox{1.0\textwidth}{!}{
    \begin{tabular}{l|cccccc|cccccc}
    \toprule
    \multicolumn{1}{l|}{\multirow{1}{*}{Model}} & \multicolumn{6}{c|}{Dev} & \multicolumn{6}{c}{Test}    \\ 
    \multicolumn{1}{l|}{}                      & \multicolumn{2}{c}{In-Table} & \multicolumn{2}{c}{In-Passage} & \multicolumn{2}{c|}{Total} & \multicolumn{2}{c}{In-Table} & \multicolumn{2}{c}{In-Passage} & \multicolumn{2}{c}{Total}   \\
                               & EM & F1   & EM & F1 & EM & F1    & EM & F1       & EM & F1 & EM & F1   \\ 
    \midrule
    Table-Only & 14.7&19.1 & 2.4&4.5 & 8.4&12.1 & 14.2&18.8 & 2.6&4.7 &  8.3&11.7 \\
    Passage-Only & 9.2&13.5 & 26.1&32.4 & 19.5&25.1 & 8.9&13.8 & 25.5&32.0 & 19.1&25.0 \\
    \hybrider ($\tau$=0.8)     & 54.3&61.4  &  39.1&45.7  &  44.0&50.7  & 56.2&63.3 & 37.5&44.4  & 43.8&50.6   \\
    \midrule
    \pointr + \sat      & 66.5\err{0.33} & 71.8\err{0.28} & 60.3\err{0.11} & 69.2\err{0.04} & 61.2\err{0.29} & 68.7\err{0.31}  & 64.6 & 70.1   & 59.6 & 68.5   & 60.1 & 67.4 \\ %
    \pointr + \tapas        & 68.1\err{0.33} & 73.9\err{0.37} & 62.9\err{0.25} & 72.0\err{0.21} & \textbf{63.3}\err{0.25} & \textbf{70.8}\err{0.12}  & 67.8 & 73.2   & 62.0 & 70.9   & 62.7 & 70.0\\  %
    \midrule
    \pointr +\tableetc       & 36.0\err{1.26} & 42.4\err{1.13} & 37.8\err{1.19} & 45.3\err{1.53} & 36.1\err{1.30} & 42.9\err{1.36} & 35.8 & 40.7   & 38.8 & 45.7   & 36.6 & 42.6 \\
    \pointr + \linformer      & 65.5\err{0.78} & 71.1\err{0.55} & 59.4\err{0.59} & 69.0\err{0.68} & 60.8\err{0.68} & 68.4\err{0.63}  & 66.1 & 71.7   & 58.9 & 67.8   & 60.2 & 67.6 \\ %
    \pointr + \model   & 68.6\err{0.37} & 74.2\err{0.26} & 62.8\err{0.25} & 71.9\err{0.20} & \textbf{63.4\err{0.16}} & \textbf{71.0\err{0.17}}  & 66.9 & 72.3  & 62.8 & 71.9 & \textbf{62.8}&\textbf{70.2}\\ %
    
    \midrule
    Human       &  &   & & & & & & & & & 88.2&93.5 \\
    \bottomrule
    \end{tabular}
}
\caption{Results of different large transformer models on \hqa. In-Table and In-Passage subsets refer to the location of the answer. For dev, we report errors over 5 runs using half the interquartile range. Since the test set is hidden and hosted online, we report the results corresponding to the model with the median total EM score on dev.
}
\label{tab:results}
\end{table*}

We also applied \model to three tasks involving table reasoning over shorter sequences. 
In Table~\ref{tab:other_results} we see that \model provides improvements in accuracy, which we attribute to a better inductive bias for tabular data. When combining \model with \emph{Counterfactual + Synthetic} intermediate pre-training (CS)~\cite{eisenschlos-etal-2020-understanding} we often get even better results. For \tabfact and \sqa we improve over the previous state-of-the-art. For \wtq we close the gap with the best published system \tabert~\cite{yin-etal-2020-tabert} (51.8 mean test accuracy), which relies on traditional semantic parsing, instead of an end-to-end approach. Dev results show a similar trend and can be found in Appendix~\ref{sec:apx-dev-results}. No special tuning was done on these models---we used the same hyper-parameters as the open source release of \tapas.
\section{Analysis}
\label{sec:analysis}
\paragraph{\hqa Error analysis} We randomly sample 100 incorrectly answered examples from the development set. 55\% of the examples have lexical near-misses---predictions have the correct information, but have slightly different formatting (e.g. (Q)uestion: \textit{In what round was the Oklahoma athlete drafted in?} (G)old answer: ``second'', (P)redicted: ``second round''). 
While around 30\% of such misses involved numerical answers (eg: ``1'' vs ``one''), the predictions for the rest of them prominently (58\% of the near misses) either had redundant or were missing auxiliary words (e.g., Q: \textit{What climate is the northern part of the home country of Tommy Douglas?} G: ``Arctic'' P: ``Arctic climate''). The inconsistency in the gold-answer format and unavailability of multiple gold answers are potential causes here.

Among the non near-misses, the majority predictions were either numerically incorrect, or were referencing an incorrect entity but still in an relevant context---especially the questions involving more than 2 hops.
(e.g. Q: \textit{In which sport has an award been given every three years since the first tournament held in 1948-1949?} G: ``\entity{Badminton}'', P: ``\entity{Thomas Cup}''). 
Reassuringly, for a huge majority (> 80\%), the entity type of the predicted answer (person, date, place, etc.) matches the type of the gold answer.
The observed errors suggest potential gains by improving the entity~\cite{Xiong2020Pretrained} and numerical~\cite{andor-etal-2019-giving} reasoning skills.

\begin{table}[!t]
\small
\centering
\resizebox{1\columnwidth}{!}{
\begin{tabular}{l|ccc}
\toprule
Model & Hits@1 & Hits@3  & Hits@5 \\
\midrule
\hybrider & 68.5 & - & -  \\
\midrule
\sat  & 77.9 & 87.4 & 90.3 \\
\tapas & \textbf{80.1} & \textbf{89.5} & 91.4 \\
\midrule
\tableetc & 51.1 & 72.0 & 78.9  \\
\linformer & 77.1 & 86.5 & 90.0  \\
\model & \textbf{80.1} & 89.2 & \textbf{91.5} \\
\midrule
\model (-- row heads) & 78.3 & 87.7 & 90.3  \\
\model (-- col heads) & 77.8 & 87.1 & 90.0  \\
\model (-- sparse pretrain) & 75.5 & 86.5 & 89.9  \\
\model (-- ambiguous) & 76.7 & 84.2 & 86.6  \\
\bottomrule
\end{tabular}
}
\caption{Retrieval results over \hqa (dev set) for models used in \pointr Cell Selection stage. 
Efficient transformer models are grouped together.
\hybrider results are obtained from \citet{chen-etal-2020-hybridqa} by composing the errors for the first components.}
\label{tab:recall}
\end{table}

\paragraph{Ablation Study} In Table~\ref{tab:recall} we compare architectures for cell selection on \hqa. Hits@k corresponds to whether a cell containing an answer span was among the top-k retrieved candidates. As an ablation, we remove the sparse pre-training and try using only row/column heads. We observe a drop also when we discard the ambiguous examples from training instead of having \abr{mml} to deal with them. Unlike the other datasets, \tapas shows comparable results to \model, but without any of the theoretical and practical improvements.

\paragraph{Observed Attention Sparsity}
Since we are interested to motivate our choices on how to sparsify the attention matrix, we can
inspect the magnitude of attention connections in a trained dense \tapas model
for table question answering.
It is important to note that in this context we are not measuring attention as an explanation method \cite{jain-wallace-2019-attention, wiegreffe-pinter-2019-attention}.
Instead we are treating the attention matrix in the fashion of magnitude based
pruning techniques \cite{hanpruning, see-etal-2016-compression}, and simply consider between which pairs of tokens the scores are concentrated.

Given a token in the input
we can aggregate the attention weights flowing from it depending on the position
of the target token in the input (\texttt{CLS} token, question, header, or table) and whether
the source and target tokens are in the same column or row, whenever it makes sense.
We average scores across all tokens, heads, layers and examples in the development set.
As a baseline, we also compare against the output of the same process when using
a uniform attention matrix, discarding padding.

In Table~\ref{tab:attention}, we show the obtained statistics considering only
table tokens as a source. We use the \wtq development set as a reference. While we see that $23\%$ of the attention weights are looking at
tokens in different columns and rows, this is only about one third of the baseline
number one would obtain with a uniform attention matrix. This effect corroborates the approach taken in \model.

\begin{table}
\small
\centering
\resizebox{\columnwidth}{!}{
\begin{tabular}{lccrrr}
\toprule
Type            & \multicolumn{1}{p{0.95cm}}{Same column} & \multicolumn{1}{p{0.5cm}}{Same row} & \multicolumn{1}{p{1.3cm}}{Attention / Uniform} & Attention \% & Uniform \% \\
\midrule
\texttt{[CLS]}          &  &  & 44.46 & 12.45  &  0.28 \\
\midrule
Question                &  &  & 10.13 & 37.08  &  3.66 \\
\midrule
\multirow{2}{*}{Header} & \xmark &  & 1.32 &  3.89  &  2.94 \\
                        & \cmark &  &  3.37 & 1.72 & 0.51 \\
\midrule
\multirow{4}{*}{Table}  & \xmark & \xmark & \textbf{0.34} & 23.59 & 68.18 \\
                        & \xmark & \cmark & 0.87 & 3.51 &  4.02 \\
                        & \cmark & \xmark & 0.70 & 13.16  & 18.84 \\
                        & \cmark & \cmark & 2.93 & 4.60  &  1.57 \\
\bottomrule
\end{tabular}
}
\caption{Average attention flow from a token in the table to other token types.
We compare to an uniform attention matrix as a baseline.
Attention to tokens in different rows and columns is the relative smallest with one third of the baseline. Computed on \wtq Dev.}
\label{tab:attention}
\end{table}
\section{Conclusion}
\label{sec:conclusion}

We introduce \model, a novel method for efficiently restricting the attention flow in Transformers applied to Tabular data.
We show in both theory and practice that the method improves inductive bias and
allows scaling training to larger sequence lengths as a result of linear complexity.
We improve the state-of-the-art on \tabfact, \sqa and \hqa, the last one by $19$ points.

\section*{Ethical Considerations}

Although one outcome of this research is more efficient Transformers for table data, it remains true that large Transformer models can be expensive to train from scratch, so experiments of this sort can incur high monetary cost and carbon emissions. This cost was reduced by conducting some experiments at relatively smaller scale, e.g.~the results of Figure~\ref{fig:speed}.  To further attenuate the impact of this work, we plan release all the models that we trained so that other researchers can reproduce and extend our work without re-training.

All human annotations required for the error analysis (Section~\ref{sec:analysis}) are provided by authors, and hence a concern of fair compensation for annotators did not arise.

\section*{Acknowledgments}
We would like to thank Yasemin Altun, Ankur Parikh, Jordan Boyd-Graber, Xavier Garcia, Syrine Krichene, Slav Petrov, and
the anonymous reviewers for their time, constructive
feedback, useful comments and suggestions about this work.

\bibliographystyle{style/acl_natbib}
\bibliography{bib/journal-full,bib/jbg}

\begin{thebibliography}{36}
\expandafter\ifx\csname natexlab\endcsname\relax\def\natexlab#1{#1}\fi

\bibitem[{Ainslie et~al.(2020)Ainslie, Ontanon, Alberti, Cvicek, Fisher, Pham,
  Ravula, Sanghai, Wang, and Yang}]{ainslie-etal-2020-etc}
Joshua Ainslie, Santiago Ontanon, Chris Alberti, Vaclav Cvicek, Zachary Fisher,
  Philip Pham, Anirudh Ravula, Sumit Sanghai, Qifan Wang, and Li~Yang. 2020.
\newblock \href {https://www.aclweb.org/anthology/2020.emnlp-main.19} {{ETC}:
  Encoding long and structured inputs in transformers}.
\newblock In \emph{Proceedings of Empirical Methods in Natural Language
  Processing}, pages 268--284, Online. Association for Computational
  Linguistics.

\bibitem[{Andor et~al.(2019)Andor, He, Lee, and
  Pitler}]{andor-etal-2019-giving}
Daniel Andor, Luheng He, Kenton Lee, and Emily Pitler. 2019.
\newblock \href {https://doi.org/10.18653/v1/D19-1609} {Giving {BERT} a
  calculator: Finding operations and arguments with reading comprehension}.
\newblock In \emph{Proceedings of Empirical Methods in Natural Language
  Processing}, pages 5947--5952, Hong Kong, China. Association for
  Computational Linguistics.

\bibitem[{Berant et~al.(2013)Berant, Chou, Frostig, and
  Liang}]{berant-etal-2013-semantic}
Jonathan Berant, Andrew Chou, Roy Frostig, and Percy Liang. 2013.
\newblock \href {https://www.aclweb.org/anthology/D13-1160} {Semantic parsing
  on {F}reebase from question-answer pairs}.
\newblock In \emph{Proceedings of Empirical Methods in Natural Language
  Processing}, pages 1533--1544, Seattle, Washington, USA. Association for
  Computational Linguistics.

\bibitem[{Cafarella et~al.(2008)Cafarella, Halevy, Wang, Wu, and
  Zhang}]{cafarella-2008-relweb}
Michael~J. Cafarella, Alon Halevy, Daisy~Zhe Wang, Eugene Wu, and Yang Zhang.
  2008.
\newblock \href
  {https://www.cs.columbia.edu/~ewu/files/papers/relweb-webdb08.pdf}
  {Uncovering the relational web}.
\newblock In \emph{WebDB}.

\bibitem[{Chen et~al.(2017)Chen, Fisch, Weston, and
  Bordes}]{chen-etal-2017-reading}
Danqi Chen, Adam Fisch, Jason Weston, and Antoine Bordes. 2017.
\newblock \href {https://doi.org/10.18653/v1/P17-1171} {Reading {W}ikipedia to
  answer open-domain questions}.
\newblock In \emph{Proceedings of the Association for Computational
  Linguistics}, pages 1870--1879, Vancouver, Canada. Association for
  Computational Linguistics.

\bibitem[{Chen et~al.(2021)Chen, Chang, Schlinger, Wang, and
  Cohen}]{chen2021open}
Wenhu Chen, Ming-Wei Chang, Eva Schlinger, William~Yang Wang, and William~W.
  Cohen. 2021.
\newblock \href {https://openreview.net/forum?id=MmCRswl1UYl} {Open question
  answering over tables and text}.
\newblock In \emph{International Conference on Learning Representations}.

\bibitem[{Chen et~al.(2020{\natexlab{a}})Chen, Wang, Chen, Zhang, Wang, Li,
  Zhou, and Wang}]{Chen2020TabFact}
Wenhu Chen, Hongmin Wang, Jianshu Chen, Yunkai Zhang, Hong Wang, Shiyang Li,
  Xiyou Zhou, and William~Yang Wang. 2020{\natexlab{a}}.
\newblock \href {https://openreview.net/forum?id=rkeJRhNYDH} {Tabfact: A
  large-scale dataset for table-based fact verification}.
\newblock In \emph{Proceedings of the International Conference on Learning
  Representations}.

\bibitem[{Chen et~al.(2020{\natexlab{b}})Chen, Zha, Chen, Xiong, Wang, and
  Wang}]{chen-etal-2020-hybridqa}
Wenhu Chen, Hanwen Zha, Zhiyu Chen, Wenhan Xiong, Hong Wang, and William~Yang
  Wang. 2020{\natexlab{b}}.
\newblock \href {https://www.aclweb.org/anthology/2020.findings-emnlp.91}
  {{H}ybrid{QA}: A dataset of multi-hop question answering over tabular and
  textual data}.
\newblock In \emph{Findings of the Association for Computational Linguistics:
  EMNLP}, pages 1026--1036, Online. Association for Computational Linguistics.

\bibitem[{Dai et~al.(2019)Dai, Yang, Yang, Carbonell, Le, and
  Salakhutdinov}]{dai-etal-2019-transformer}
Zihang Dai, Zhilin Yang, Yiming Yang, Jaime Carbonell, Quoc Le, and Ruslan
  Salakhutdinov. 2019.
\newblock \href {https://doi.org/10.18653/v1/P19-1285} {Transformer-{XL}:
  Attentive language models beyond a fixed-length context}.
\newblock In \emph{Proceedings of the Association for Computational
  Linguistics}, pages 2978--2988, Florence, Italy. Association for
  Computational Linguistics.

\bibitem[{Devlin et~al.(2019)Devlin, Chang, Lee, and Toutanova}]{devlin-19}
Jacob Devlin, Ming-Wei Chang, Kenton Lee, and Kristina Toutanova. 2019.
\newblock \href {https://doi.org/10.18653/v1/N19-1423} {{BERT}: Pre-training of
  deep bidirectional transformers for language understanding}.
\newblock In \emph{Proceedings of the 2019 Conference of the North {A}merican
  Chapter of the Association for Computational Linguistics: Human Language
  Technologies, Volume 1 (Long and Short Papers)}, pages 4171--4186,
  Minneapolis, Minnesota. Association for Computational Linguistics.

\bibitem[{Eisenschlos et~al.(2020)Eisenschlos, Krichene, and
  M{\"u}ller}]{eisenschlos-etal-2020-understanding}
Julian Eisenschlos, Syrine Krichene, and Thomas M{\"u}ller. 2020.
\newblock \href {https://www.aclweb.org/anthology/2020.findings-emnlp.27}
  {Understanding tables with intermediate pre-training}.
\newblock In \emph{Findings of the Association for Computational Linguistics:
  EMNLP}, pages 281--296, Online. Association for Computational Linguistics.

\bibitem[{Guu et~al.(2017)Guu, Pasupat, Liu, and
  Liang}]{guu-etal-2017-language}
Kelvin Guu, Panupong Pasupat, Evan Liu, and Percy Liang. 2017.
\newblock \href {https://doi.org/10.18653/v1/P17-1097} {From language to
  programs: Bridging reinforcement learning and maximum marginal likelihood}.
\newblock In \emph{Proceedings of the Association for Computational
  Linguistics}, pages 1051--1062, Vancouver, Canada. Association for
  Computational Linguistics.

\bibitem[{Han et~al.(2015)Han, Pool, Tran, and Dally}]{hanpruning}
Song Han, Jeff Pool, John Tran, and William Dally. 2015.
\newblock \href
  {https://proceedings.neurips.cc/paper/2015/file/ae0eb3eed39d2bcef4622b2499a05fe6-Paper.pdf}
  {Learning both weights and connections for efficient neural network}.
\newblock In \emph{Proceedings of Advances in Neural Information Processing
  Systems}, volume~28, pages 1135--1143. Curran Associates, Inc.

\bibitem[{Herzig et~al.(2021)Herzig, M{\"u}ller, Krichene, and
  Eisenschlos}]{herzig-etal-2021-open}
Jonathan Herzig, Thomas M{\"u}ller, Syrine Krichene, and Julian Eisenschlos.
  2021.
\newblock \href {https://doi.org/10.18653/v1/2021.naacl-main.43} {Open domain
  question answering over tables via dense retrieval}.
\newblock In \emph{Conference of the North American Chapter of the Association
  for Computational Linguistics}, pages 512--519, Online. Association for
  Computational Linguistics.

\bibitem[{Herzig et~al.(2020)Herzig, Nowak, M{\"u}ller, Piccinno, and
  Eisenschlos}]{herzig-etal-2020-tapas}
Jonathan Herzig, Pawel~Krzysztof Nowak, Thomas M{\"u}ller, Francesco Piccinno,
  and Julian Eisenschlos. 2020.
\newblock \href {https://doi.org/10.18653/v1/2020.acl-main.398} {{T}a{P}as:
  Weakly supervised table parsing via pre-training}.
\newblock In \emph{Proceedings of the Association for Computational
  Linguistics}, pages 4320--4333, Online. Association for Computational
  Linguistics.

\bibitem[{Iyyer et~al.(2017)Iyyer, Yih, and Chang}]{iyyer-etal-2017-search}
Mohit Iyyer, Wen-tau Yih, and Ming-Wei Chang. 2017.
\newblock \href {https://doi.org/10.18653/v1/P17-1167} {Search-based neural
  structured learning for sequential question answering}.
\newblock In \emph{Proceedings of the Association for Computational
  Linguistics}, pages 1821--1831, Vancouver, Canada. Association for
  Computational Linguistics.

\bibitem[{Jain and Wallace(2019)}]{jain-wallace-2019-attention}
Sarthak Jain and Byron~C. Wallace. 2019.
\newblock \href {https://doi.org/10.18653/v1/N19-1357} {{A}ttention is not
  {E}xplanation}.
\newblock In \emph{Conference of the North American Chapter of the Association
  for Computational Linguistics}, pages 3543--3556, Minneapolis, Minnesota.
  Association for Computational Linguistics.

\bibitem[{Kardas et~al.(2020)Kardas, Czapla, Stenetorp, Ruder, Riedel, Taylor,
  and Stojnic}]{kardas-etal-2020-axcell}
Marcin Kardas, Piotr Czapla, Pontus Stenetorp, Sebastian Ruder, Sebastian
  Riedel, Ross Taylor, and Robert Stojnic. 2020.
\newblock \href {https://doi.org/10.18653/v1/2020.emnlp-main.692} {{AxCell}:
  Automatic extraction of results from machine learning papers}.
\newblock In \emph{Proceedings of Empirical Methods in Natural Language
  Processing}, pages 8580--8594, Online. Association for Computational
  Linguistics.

\bibitem[{Kitaev et~al.(2020)Kitaev, Kaiser, and Levskaya}]{kitaev2020reformer}
Nikita Kitaev, Lukasz Kaiser, and Anselm Levskaya. 2020.
\newblock \href {https://openreview.net/forum?id=rkgNKkHtvB} {Reformer: The
  efficient transformer}.
\newblock In \emph{Proceedings of the International Conference on Learning
  Representations}.

\bibitem[{Krichene et~al.(2021)Krichene, M{\"u}ller, and
  Eisenschlos}]{krichene-etal-2021-dot}
Syrine Krichene, Thomas M{\"u}ller, and Julian Eisenschlos. 2021.
\newblock \href {https://doi.org/10.18653/v1/2021.findings-acl.289} {{D}o{T}:
  An efficient double transformer for {NLP} tasks with tables}.
\newblock In \emph{Findings of the Association for Computational Linguistics:
  ACL}, pages 3273--3283, Online. Association for Computational Linguistics.

\bibitem[{Kwiatkowski et~al.(2019)Kwiatkowski, Palomaki, Redfield, Collins,
  Parikh, Alberti, Epstein, Polosukhin, Kelcey, Devlin, Lee, Toutanova, Jones,
  Chang, Dai, Uszkoreit, Le, and Petrov}]{kwiatkowski-19}
Tom Kwiatkowski, Jennimaria Palomaki, Olivia Redfield, Michael Collins, Ankur
  Parikh, Chris Alberti, Danielle Epstein, Illia Polosukhin, Matthew Kelcey,
  Jacob Devlin, Kenton Lee, Kristina~N. Toutanova, Llion Jones, Ming-Wei Chang,
  Andrew Dai, Jakob Uszkoreit, Quoc Le, and Slav Petrov. 2019.
\newblock \href
  {https://tomkwiat.users.x20web.corp.google.com/papers/natural-questions/main-1455-kwiatkowski.pdf}
  {Natural questions: a benchmark for question answering research}.
\newblock \emph{Transactions of the Association of Computational Linguistics}.

\bibitem[{Lee et~al.(2019)Lee, Chang, and Toutanova}]{lee-19}
Kenton Lee, Ming-Wei Chang, and Kristina Toutanova. 2019.
\newblock \href {https://doi.org/10.18653/v1/P19-1612} {Latent retrieval for
  weakly supervised open domain question answering}.
\newblock In \emph{Proceedings of the Association for Computational
  Linguistics}, pages 6086--6096, Florence, Italy. Association for
  Computational Linguistics.

\bibitem[{Liang et~al.(2013)Liang, Jordan, and
  Klein}]{liang-etal-2013-learning}
Percy Liang, Michael~I. Jordan, and Dan Klein. 2013.
\newblock \href {https://doi.org/10.1162/COLI_a_00127} {Learning
  dependency-based compositional semantics}.
\newblock \emph{Computational Linguistics}, 39(2):389--446.

\bibitem[{Pasupat and Liang(2015)}]{pasupat2015compositional}
Panupong Pasupat and Percy Liang. 2015.
\newblock \href {https://doi.org/10.3115/v1/P15-1142} {Compositional semantic
  parsing on semi-structured tables}.
\newblock In \emph{Proceedings of the Association for Computational
  Linguistics}, pages 1470--1480, Beijing, China. Association for Computational
  Linguistics.

\bibitem[{See et~al.(2016)See, Luong, and Manning}]{see-etal-2016-compression}
Abigail See, Minh-Thang Luong, and Christopher~D. Manning. 2016.
\newblock \href {https://doi.org/10.18653/v1/K16-1029} {Compression of neural
  machine translation models via pruning}.
\newblock In \emph{Proceedings of The 20th {SIGNLL} Conference on Computational
  Natural Language Learning}, pages 291--301, Berlin, Germany. Association for
  Computational Linguistics.

\bibitem[{Talmor et~al.(2021)Talmor, Yoran, Catav, Lahav, Wang, Asai, Ilharco,
  Hajishirzi, and Berant}]{talmor2021multimodalqa}
Alon Talmor, Ori Yoran, Amnon Catav, Dan Lahav, Yizhong Wang, Akari Asai,
  Gabriel Ilharco, Hannaneh Hajishirzi, and Jonathan Berant. 2021.
\newblock \href {https://openreview.net/forum?id=ee6W5UgQLa}
  {Multimodal{\{}qa{\}}: complex question answering over text, tables and
  images}.
\newblock In \emph{Proceedings of the International Conference on Learning
  Representations}.

\bibitem[{Tay et~al.(2020)Tay, Dehghani, Bahri, and Metzler}]{tay2020efficient}
Yi~Tay, Mostafa Dehghani, Dara Bahri, and Donald Metzler. 2020.
\newblock \href {http://arxiv.org/abs/2009.06732} {Efficient transformers: A
  survey}.

\bibitem[{Vaswani et~al.(2017)Vaswani, Shazeer, Parmar, Uszkoreit, Jones,
  Gomez, Kaiser, and Polosukhin}]{vaswani17}
Ashish Vaswani, Noam Shazeer, Niki Parmar, Jakob Uszkoreit, Llion Jones,
  Aidan~N Gomez, \L~ukasz Kaiser, and Illia Polosukhin. 2017.
\newblock \href
  {https://proceedings.neurips.cc/paper/2017/file/3f5ee243547dee91fbd053c1c4a845aa-Paper.pdf}
  {Attention is all you need}.
\newblock In \emph{Advances in Neural Information Processing Systems},
  volume~30, pages 5998--6008. Curran Associates, Inc.

\bibitem[{Wang et~al.(2020)Wang, Li, Khabsa, Fang, and Ma}]{wang2020linformer}
Sinong Wang, Belinda~Z. Li, Madian Khabsa, Han Fang, and Hao Ma. 2020.
\newblock \href {http://arxiv.org/abs/2006.04768} {Linformer: Self-attention
  with linear complexity}.

\bibitem[{Wiegreffe and Pinter(2019)}]{wiegreffe-pinter-2019-attention}
Sarah Wiegreffe and Yuval Pinter. 2019.
\newblock \href {https://doi.org/10.18653/v1/D19-1002} {Attention is not not
  explanation}.
\newblock In \emph{Proceedings of Empirical Methods in Natural Language
  Processing}, pages 11--20, Hong Kong, China. Association for Computational
  Linguistics.

\bibitem[{Xiong et~al.(2020)Xiong, Du, Wang, and
  Stoyanov}]{Xiong2020Pretrained}
Wenhan Xiong, Jingfei Du, William~Yang Wang, and Veselin Stoyanov. 2020.
\newblock \href {https://openreview.net/forum?id=BJlzm64tDH} {Pretrained
  encyclopedia: Weakly supervised knowledge-pretrained language model}.
\newblock In \emph{Proceedings of the International Conference on Learning
  Representations}.

\bibitem[{Yin et~al.(2020)Yin, Neubig, Yih, and Riedel}]{yin-etal-2020-tabert}
Pengcheng Yin, Graham Neubig, Wen-tau Yih, and Sebastian Riedel. 2020.
\newblock \href {https://doi.org/10.18653/v1/2020.acl-main.745} {{T}a{BERT}:
  Pretraining for joint understanding of textual and tabular data}.
\newblock In \emph{Proceedings of the Association for Computational
  Linguistics}, pages 8413--8426, Online. Association for Computational
  Linguistics.

\bibitem[{Yun et~al.(2020{\natexlab{a}})Yun, Bhojanapalli, Rawat, Reddi, and
  Kumar}]{Yun2020Are}
Chulhee Yun, Srinadh Bhojanapalli, Ankit~Singh Rawat, Sashank Reddi, and Sanjiv
  Kumar. 2020{\natexlab{a}}.
\newblock \href {https://openreview.net/forum?id=ByxRM0Ntvr} {Are transformers
  universal approximators of sequence-to-sequence functions?}
\newblock In \emph{Proceedings of the International Conference on Learning
  Representations}.

\bibitem[{Yun et~al.(2020{\natexlab{b}})Yun, Chang, Bhojanapalli, Rawat, Reddi,
  and Kumar}]{univapprox2020}
Chulhee Yun, Yin-Wen Chang, Srinadh Bhojanapalli, Ankit Rawat, Sashank Reddi,
  and Sanjiv Kumar. 2020{\natexlab{b}}.
\newblock \href
  {https://papers.nips.cc/paper/2020/hash/9ed27554c893b5bad850a422c3538c15-Abstract.html}
  {O(n) connections are expressive enough: Universal approximability of sparse
  transformers}.
\newblock In \emph{Proceedings of Advances in Neural Information Processing
  Systems}.

\bibitem[{Zaheer et~al.(2020)Zaheer, Guruganesh, Dubey, Ainslie, Alberti,
  Ontanon, Pham, Ravula, Wang, Yang et~al.}]{zaheer2020bigbird}
Manzil Zaheer, Guru Guruganesh, Kumar~Avinava Dubey, Joshua Ainslie, Chris
  Alberti, Santiago Ontanon, Philip Pham, Anirudh Ravula, Qifan Wang, Li~Yang,
  et~al. 2020.
\newblock \href
  {https://papers.nips.cc/paper/2020/hash/c8512d142a2d849725f31a9a7a361ab9-Abstract.html}
  {Big bird: Transformers for longer sequences}.
\newblock In \emph{Proceedings of Advances in Neural Information Processing
  Systems}, volume~33.

\bibitem[{Zhang et~al.(2020)Zhang, Wang, Wang, Cao, Zhang, and
  Wang}]{zhang-etal-2020-table}
Hongzhi Zhang, Yingyao Wang, Sirui Wang, Xuezhi Cao, Fuzheng Zhang, and
  Zhongyuan Wang. 2020.
\newblock \href {https://www.aclweb.org/anthology/2020.emnlp-main.126} {Table
  fact verification with structure-aware transformer}.
\newblock In \emph{Proceedings of Empirical Methods in Natural Language
  Processing}, pages 1624--1629, Online. Association for Computational
  Linguistics.

\end{thebibliography}

\clearpage

\appendix\section*{Appendix}

We provide all details on our experimental setup to reproduce the results in Section~\ref{sec:apx-exp}.
In Section~\ref{sec:apx-dev-results} we show the development set results for our experiments. 
The proof for Theorem~\ref{theo:universal} is described in Section~\ref{sec:apx-universal} and in Section~\ref{sec:apx-code} we include the main code blocks for implementing \model efficiently on a deep learning framework.

\section{Experimental setup}
\label{sec:apx-exp}

\subsection{Pre-training}

Pre-training for \model was performed with constrained attention with a masked language modeling objective applied to the corpus of tables and text extracted by ~\citet{herzig-etal-2020-tapas}. 
With a sequence length of $128$ and batch size of $512$, the total training of $1$ million steps took $2$ days.

In contrast, for \linformer the pre-training was done with a sequence length of $2048$ and a batch size of $128$, and the total training took $12$ days for $2$ million steps. For \tableetc we also pre-trained for $2$ million steps but the batch size had to be lowered to $32$.
In all cases the hardware used was a 32 core \textbf{Cloud TPUs V3}.

\subsection{Fine-tuning}

For all experiments we use Large models over $5$ random seeds and report the median results. Errors are estimated with half the interquartile range. For \tabfact, \sqa and \wtq we keep the original hyper-parameters used in \tapas and provided in the open source release. In Figure~\ref{fig:flops} we show the floating point operation count of the different Transformer models as we increase the sequence length, as extracted from the execution graph. We also measure the memory doing CPU inference in figure
~\ref{fig:memory}. The linear scaling of \linformer and \model can be observed. 
No additional tuning or sweep was done to obtain the published results. We set the global size $G$ to $116$ and the radius $R$ for local attention to $42$. We use an Adam optimizer with weight decay with the same configuration as \bert.
The number of parameters for \model is the same as for \bert: $340M$ for Large models and $110M$ for Base Models.

In the \hqa cell selection stage, we use a batch size of $128$ and train for $80,000$ steps and a sequence length of $2048$. Training requires $1$ day.
We clip the gradients to $10$ and use a learning rate of $1\times 10^{-5}$ under a $5\%$ warm-up schedule.
For the reader stage use a learning rate of $5\times 10^{-5}$ under a $1\%$ warm-up schedule, a batch size of $512$ and train for $25,000$ steps, which takes around $6$ hours.

\begin{figure}
    \centering
    \includegraphics[width=.95\linewidth]{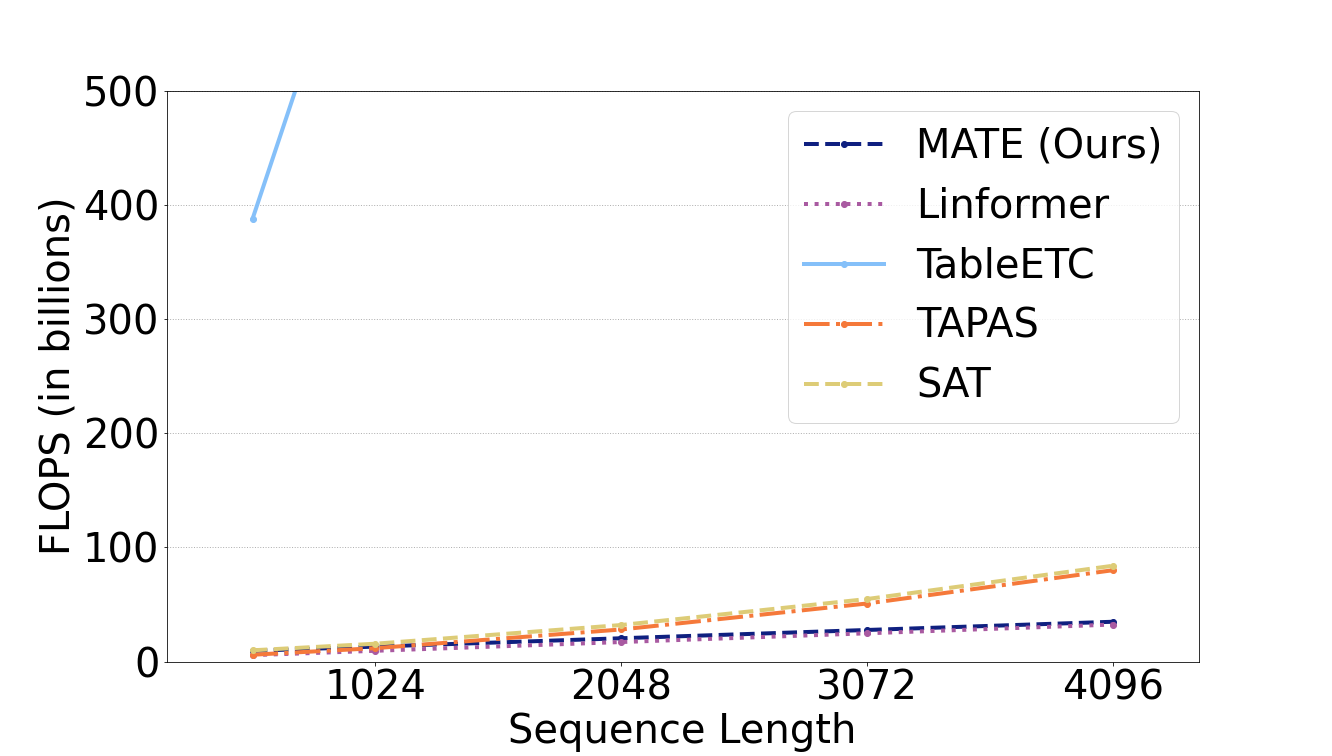}
    \caption{Comparison of inference FLOPS obtained from execution graph. While \tableetc is linear, relative attention adds a high computation cost so keep it out of range in the figure.}
    \label{fig:flops}
\end{figure}

\begin{figure}
    \centering
    \includegraphics[width=.95\linewidth]{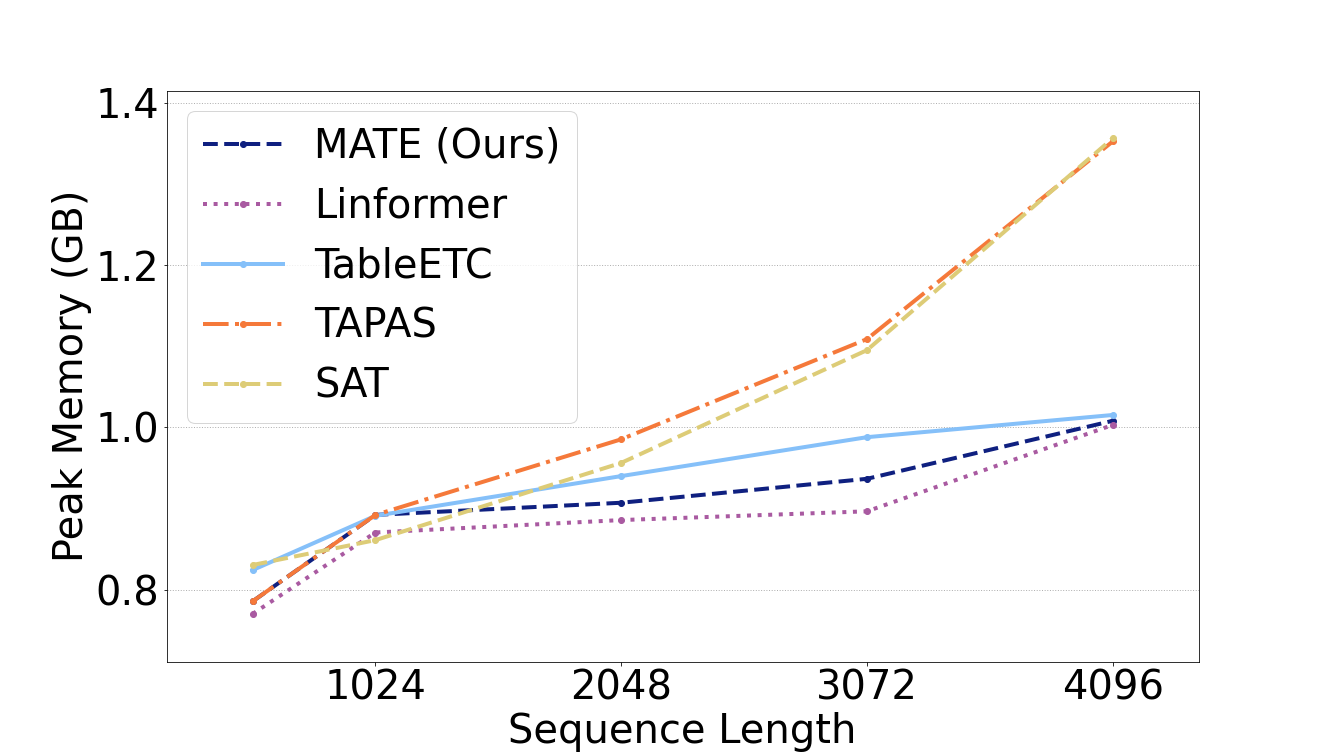}
    \caption{Comparison of the peak memory usage during CPU inference shows the linear asymptotic curve of the memory footprint of \model. }
    \label{fig:memory}
\end{figure}

\section{Development set results for \sqa, \wtq and \tabfact}
\label{sec:apx-dev-results}

We show in Table~\ref{tab:other_results_dev} the dev set results for all datasets we attempted, which show consistent results with the test set reported in the main paper.

\begin{table} %
\small
\centering
\resizebox{1\columnwidth}{!}{
\begin{tabular}{l|cccc}
\toprule
Model & \sqa \abr{all} & \sqa \abr{seq} & \wtq  & \tabfact \\
\midrule
\tapas & 64.9\err{0.5} & 40.0\err{1.0} & 41.6\err{1.0} & 76.9\err{0.4} \\
\model & \textbf{67.0}\err{0.1} & \textbf{43.2}\err{0.2} & \textbf{42.9}\err{0.6} & \textbf{77.5}\err{0.3} \\
\midrule
\tapas + CS & \textbf{68.0}\err{0.2} & \textbf{45.8}\err{0.3} & 46.2\err{0.2} & 81.0\err{0.1} \\
\model + CS & \textbf{68.0}\err{0.4} & 44.9\err{0.4} & \textbf{50.1}\err{0.7} & \textbf{81.3}\err{0.1} \\
\bottomrule
\end{tabular}
}
\caption{Dev results of using \model on other table parsing datasets. Errors are estimated with half the interquartile range over 5 runs.}
\label{tab:other_results_dev}
\end{table}

\section{Proof of Theorem \ref{theo:universal}}
\label{sec:apx-universal}
 
In this section we discuss the proof that \model are universal approximators of sequence functions.

\begin{theorem*}
If the number of heads is at least $3$ and the hidden size of the feed forward layer is at least $4$, then for any $f\in \mathcal{F}$ and 
$\epsilon \in \mathbb{R}_+$ 
there exists $\hat{f}\in\mathcal{T}_\model$ such that $||\hat{f} - {f}||_p < \epsilon$
\end{theorem*}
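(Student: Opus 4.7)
The plan is to reduce the statement to the sparse-transformer universal approximation theorem of \citet{univapprox2020} (together with the extensions in \citet{zaheer2020bigbird}), which covers transformers whose layers use a single sparse attention pattern satisfying a mild connectivity condition. The reason the hypothesis asks for at least $3$ heads is exactly to make this reduction work via pigeonhole: among $h_r + h_c \geq 3$ heads we must have $h_r \geq 2$ or $h_c \geq 2$, so at least two heads in every layer share the same attention pattern. Without loss of generality assume it is the row-head pattern.

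First I would perform the pigeonhole reduction at the level of a single layer. Given a target \model with $h\geq 3$ heads per layer, I set $\mathbf W_V^i = 0$ for every head whose type is the minority type (column heads under our assumption). These heads then contribute identically zero to the layer output regardless of their softmax distributions, so the layer becomes functionally a sparse transformer with at least two row heads and a single attention pattern. Doing this in every layer yields a sub-family $\mathcal T_\model^{\mathrm{row}} \subseteq \mathcal T_\model$ whose elements are precisely sparse transformers with the row-only MATE pattern; it therefore suffices to show that $\mathcal T_\model^{\mathrm{row}}$ is dense in $\mathcal F$.

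Second I would verify that the row-only MATE pattern, augmented by the global query section $Q$ (containing \texttt{[CLS]} and \texttt{[SEP]}), satisfies the structural prerequisites of \citet{univapprox2020}. Interpreting the pattern as a directed graph, it consists of row-cliques glued together by the fact that $Q$ attends to, and is attended by, every token. Two conditions are needed: (i) every token is contained in the attended set of at least one head, which is immediate since each token attends at minimum to its own row and to $Q$; and (ii) the pattern supports a sequence of attention/feed-forward blocks that implements the contextual mappings used in \citet{Yun2020Are}, including a directed path through all tokens and a star-like broadcast primitive. Both are provided by $Q$, which plays the role of the global memory in \citet{zaheer2020bigbird}: any row-clique communicates with any other row-clique in two hops through $Q$, giving the required Hamiltonian-style ordering and the $\mathrm{Star}$-subgraph needed by their construction. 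The requirement that the feed-forward hidden size be at least $4$ matches the hypothesis already used by the cited results.

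Third I would invoke the sparse universal approximation theorem to obtain, for any $f \in \mathcal F$ and $\epsilon > 0$, an approximator $\hat f$ in the reduced class with $\|\hat f - f\|_p < \epsilon$, and then lift $\hat f$ back into $\mathcal T_\model$ by keeping the row-head parameters and retaining the zero value projections on the column heads. The main obstacle I anticipate is Step~2: making precise that the combination of row-cliques with a universal global section $Q$ really instantiates the axiomatic "star graph plus path" requirement of \citet{univapprox2020}, rather than just satisfying it in spirit. The verification should be a direct adaptation of their global-token argument, but it is the only place where MATE's specific attention geometry enters the proof in a non-trivial way, so it deserves the most careful bookkeeping.
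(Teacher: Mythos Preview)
Your proposal is correct and follows essentially the same route as the paper: pigeonhole to get two same-type heads, zero out the value projections of the remaining heads to collapse to a single-pattern sparse transformer, then invoke Theorem~1 of \citet{univapprox2020}. The paper's verification of the sparsity assumption is terser than yours---it simply notes that the first token (in $Q$) attends to and from every other token and that every token attends to itself, which is all that Assumption~1 of \citet{univapprox2020} requires---so the step you flagged as the main obstacle is in fact almost immediate and does not need the full Hamiltonian-path bookkeeping you anticipate.
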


\begin{proof}
When the number of heads is at least $3$, there are at least $2$ heads of the same type. Fixing those two heads, we may restrict the value of the projection weights $W_V$ to be $0$ for the rest of the heads. This is equivalent to having only those two heads with the same attention pattern to begin with.
This restriction only makes the family of functions modelled by \model smaller. In a similar way, we can assume that the hidden size of the feed-forward layer is exactly $4$ and that the head size is $1$.

Note that the attention pattern of the two heads, regardless of its type contains a token (the first one) which attends to and from every other token. We also have that every token attends to itself. Then Assumption 1 of \citet{univapprox2020} is satisfied. 
Hence we rely on Theorem 1 of \citet{univapprox2020}, which asserts that sparse transformers with $2$ heads, hidden size $4$ and head size $1$ are universal approximators, which concludes the proof.
\end{proof}

\section{TensorFlow Implementation}
\label{sec:apx-code}

In figure~\ref{code} we provide an approximate implementation of \model in the TensorFlow library. For the sake of simplicity we omit how attention is masked between neighbor buckets for tokens in difference columns or rows. We also omit the tensor manipulation steps to reorder and reshape the sequence into equally sized buckets to compute attention across consecutive buckets. The full implementation will be part of the open source release.

\definecolor{codegreen}{rgb}{0,0.6,0}
\definecolor{codegray}{rgb}{0.5,0.5,0.5}
\definecolor{codepurple}{rgb}{0.58,0,0.82}
\definecolor{backcolour}{rgb}{0.95,0.95,0.92}

\lstdefinestyle{mystyle}{
    commentstyle=\color{codegreen},
    keywordstyle=\color{magenta},
    numberstyle=\tiny\color{codegray},
    stringstyle=\color{codepurple},
    basicstyle=\ttfamily\footnotesize,
    breakatwhitespace=false,         
    breaklines=true,                 
    captionpos=b,                    
    keepspaces=true,                 
    numbersep=5pt,                  
    showspaces=false,                
    showstringspaces=false,
    showtabs=false,                  
    tabsize=2
}

\lstset{style=mystyle}

\begin{figure*}%
\begin{center}
{\small
\begin{lstlisting}[language=Python]
import dataclasses
import tensorflow as tf

@dataclasses.dataclass
class MultiViewEmbedding():
  """Results of sorting and reshaping an embedding tensor.

  Different views of the tensor created to facilitate attention across tokens
  from global/long parts. First two dimensions are `batch_size` and `num_heads`:

  Attributes:
    full: <float32>[..., seq_length, embedding_size]. 
      Original tensor without any bucketing.
    global: <float32>[..., global_length, embedding_size].
    long: <float32>[..., long_length/radius, radius, embedding_size]
    window: <float32>[..., long_length/radius, 3*radius, embedding_size]
      Same as `long` but also a rotation to the left and right, in order
      to achieve attention to the previous and next bucket.
  """
  full: tf.Tensor
  global: tf.Tensor
  long: tf.Tensor
  window: tf.Tensor

def multi_view_attention(
    Q: MultiViewEmbedding,
    K: MultiViewEmbedding,
    V: MultiViewEmbedding,
    embedding_size: int,
    global_length: int,
    long_length: int,
    num_heads: int,
):
  # <float32>[batch_size, num_heads, global_length, sequence_length]
  attention_prob_from_global = tf.nn.softmax(tf.einsum(
      'BHFE,BHTE->BHFT', Q.global, K.full) / sqrt(embedding_size))
  # <float32>[batch_size, num_heads, long_length, global_length]
  attention_score_to_global = tf.einsum('BHNFE,BHTE->BHNFT',
      Q.long, K.global) / sqrt(embedding_size)
  # <float32>[batch_size, num_heads, long_length, 3 * radius]
  attention_score_to_window = tf.einsum('BHNFE,BHNTE->BHNFT',
      Q.long, K.window) / sqrt(embedding_size)
  # <float32>[batch_size, num_heads, long_length, global_length + 3 * radius]
  attention_prob_from_long = tf.nn.softmax(tf.concat(
      [attention_score_to_global, attention_score_to_window], axis=-1))
  attention_prob_to_global = attention_prob_from_long[..., :global_length]
  attention_prob_to_window = attention_prob_from_long[..., global_length:]

  # <float32>[batch_size, num_heads, global_length, embedding_size]
  context_layer_from_global = tf.einsum('BHFT,BHTE->BHFE',
      attention_prob_from_global, V.full)
  # <float32>[batch_size, num_heads, long_length / radius, radius, embedding_size]
  context_layer_to_global = tf.einsum('BHNFT,BHTE->BHNFE',
      attention_prob_to_global, V.global)
  # <float32>[batch_size, num_heads, long_length / radius, radius, embedding_size]
  context_layer_to_window = tf.einsum('BHNFT,BHNTE->BHNFE',
      attention_prob_to_window, V.window)
      
  context_layer_from_long = tf.reshape(
      context_layer_to_first + context_layer_to_window, 
      [-1, num_heads, long_length, embedding_size])
  return tf.concat(
      [context_layer_from_global, context_layer_from_long], axis=-1)

\end{lstlisting}
}
\caption{Implementation of \model in TensorFlow. The creation of \texttt{MultiViewEmbedding} is ommited and relies on \texttt{tf.gather} for ordering the input. We also omit the use of the input mask and column and row index to further mask the sparse attention matrix.}
\label{code}
\end{center}
\end{figure*}

\end{document}